\documentclass[letterpaper, 10 pt, conference]{ieeeconf}  

\IEEEoverridecommandlockouts                              

\usepackage{hyperref}
\hypersetup{
    colorlinks=true,
    linkcolor=black,    
    urlcolor=blue,
    }
\usepackage{algorithm}
\usepackage{algpseudocode}
\usepackage{graphicx}
\usepackage{subcaption}
\usepackage{booktabs}
\usepackage{multirow}
\usepackage{epsfig} 
\usepackage{amsmath} 
\usepackage{amssymb}  
\usepackage{mathrsfs}
\usepackage{cleveref}
\usepackage{color}
\usepackage{calc}
\usepackage{dsfont}
\usepackage{bm}
\usepackage{float}
\usepackage{url}
\newtheorem{proposition}{Proposition}

\usepackage{xspace}
\usepackage[dvipsnames]{xcolor}
\usepackage[font=small]{caption}

\def \FigureAbbreviaition {Fig.}
\newcommand{\figref}[1]{\FigureAbbreviaition~\ref{#1}}

\usepackage{amsfonts} 
\title{\LARGE \bf
Safety-Critical Control for Smoothed Implicit Contact Dynamics
}
\author{Haegu Lee, Yitaek Kim and Christoffer Sloth
\thanks{All authors are with The Maersk Mc-Kinney Moller Institute, University of Southern Denmark, Denmark {\tt\small \{haeg, yik, chsl\}@mmmi.sdu.dk}}}
\begin{document}
\maketitle
\thispagestyle{empty}
\pagestyle{empty}
\begin{abstract}
Smoothed implicit contact dynamics enables gradient-based planning and control for contact-rich tasks without predefined mode sequences.
However, safety-critical control remains challenging because implicit contact dynamics makes safety-filter design nontrivial.
The smoothing parameter $\kappa$ relaxes contact complementarity constraints, which makes the dynamics smooth but affects the contact force.
This paper provides a safety-filtering framework for smoothed implicit contact dynamics. We first derive a discrete-time control barrier function (CBF) constraint using a first-order Taylor approximation of the implicitly defined contact force.
We show that, although reducing $\kappa$ can improve local force-approximation accuracy, the resulting closed-loop force-constraint violations can vary non-monotonically with $\kappa$. Motivated by this observation, we introduce boundary-focused rollouts that screen candidate $\kappa$ values by comparing the predicted safety margin with the observed one-step under-prediction. We then robustly tighten the predicted CBF constraint with a fixed margin to account for residual force under-prediction. Simulations on four contact-rich systems show that the proposed method eliminates force violations observed under a standard CBF. Project website : \url{https://contact-cbf.github.io/}
\end{abstract}

\section{Introduction}\label{sec:introduction}

Robotic systems that undergo intermittent contact interaction with the environment are difficult to plan because their contact modes change over time \cite{patel2019contact, hogan2020feedback}. Hybrid contact controllers may explicitly resolve contact modes online \cite{hogan2020reactive}, which can be nontrivial for contact-rich tasks. However, implicit contact dynamics provides an attractive alternative by representing contact within a unified dynamics framework without requiring predefined mode sequences. Implicit contact dynamics has facilitated trajectory planning through contact in several recent works \cite{posa2014direct, kim2025contact, shirai2024robust, lee2025trajectory}.

Contact dynamics are inherently nonsmooth and non-differentiable, which makes them difficult to handle with gradient-based methods. A common approach is to smooth the contact dynamics to make it suitable for gradient-based optimization. Existing smoothing methods aim to make the dynamics differentiable \cite{suh2022bundled, howell2022dojo}. In particular, smoothed contact dynamics have been successfully applied to gradient-based planning through contact \cite{pang2023global, suh2025dexterous, kurtz2026inverse}. However, these advances have primarily focused on differentiable simulation and planning, rather than safety-critical control.

 \begin{figure}[t]
    \centering
    \includegraphics[width=1\linewidth]{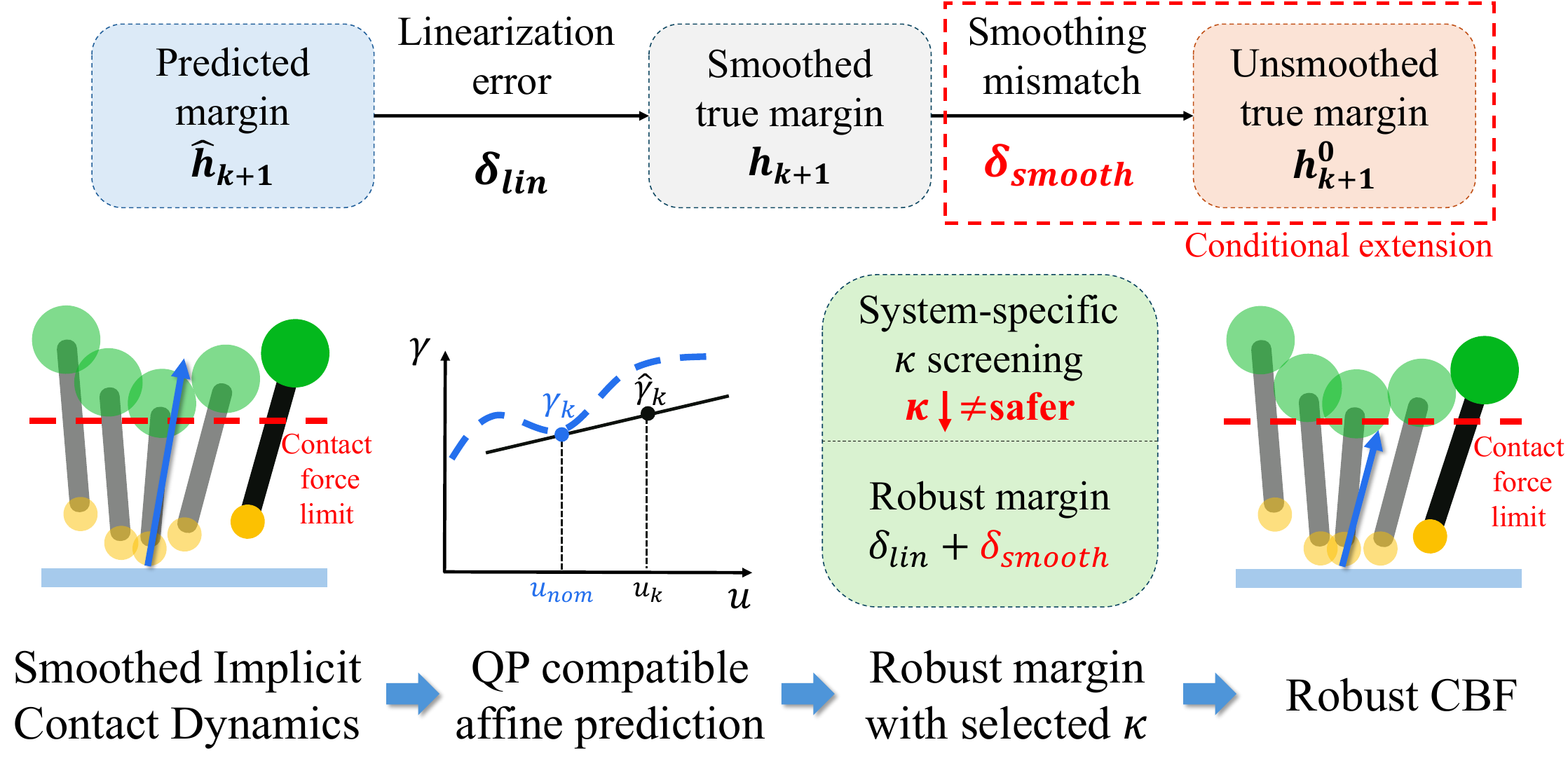}
    \caption{Proposed robust CBF framework for smoothed implicit contact dynamics. A local affine prediction enables a CBF-QP, while $\kappa$ screening and robust tightening address linearization error, with a conditional extension to the unsmoothed model.}
    \label{fig:hero_figure}
\end{figure}

Safety guarantees are often required in contact-rich control, particularly when tasks involve fragile objects, uncertain contact conditions, and human-robot interaction. Recent studies have incorporated force-related safety using explicit contact models through interaction-force constraints and CBF-based safety filtering \cite{wang2025guarding, vinter2024safe, kim2025robust}. Robust CBF formulations have also been developed
to account for disturbances and uncertainty in the system dynamics \cite{xu2015robustness, jankovic2018robust}. However, these approaches rely on explicit models whose dynamics are often control-affine and therefore readily compatible with standard CBF formulations. 

Recent advances in differentiable physics engines have enabled efficient gradient-based simulation and optimization for contact-rich robotic systems \cite{hu2019difftaichi, geilinger2020add, lee2023differentiable}. Among these approaches, formulations based on Nonlinear Complementarity Problems (NCPs) with second-order cone constraints provide accurate and differentiable simulation of implicit contact dynamics \cite{howell2022dojo}. Although these models have shown strong performance in trajectory optimization and planning through contact \cite{le2024fast}, formal safety guarantees for the strict enforcement of contact-force constraints remain limited. 
This is because implicit contact models are generally non-affine in the control input. As a result, standard CBF formulations for control-affine systems cannot be applied directly \cite{ames2019control, agrawal2017discrete, zeng2021safety}. Although several studies have considered safety for non-affine systems \cite{son2019safety, seo2022safety, xiao2023safe}, they do not directly address implicit contact dynamics governed by complementarity constraints. Therefore, ensuring contact-force safety under implicit contact dynamics remains an open challenge.

In this paper, we make three main contributions:
\begin{enumerate}
    \renewcommand{\labelenumi}{\arabic{enumi})}
    \item We formulate a discrete-time CBF framework for contact-force safety in smoothed implicit contact dynamics, with the safety set defined by contact-force constraints.

    \item We characterize how the smoothing parameter $\kappa$ affects the CBF safety condition by relating the predicted safety margin to the one-step local force-approximation error, showing that a smaller $\kappa$ can improve local accuracy while degrading safety-filter reliability.

    \item We propose a robust safety filter that tightens the predicted CBF condition with a conservative approximation-error margin to bound the smoothed contact force, with a conditional extension to the unsmoothed force given a valid smoothing-mismatch bound.
\end{enumerate}
Simulations on multiple contact-rich systems show that the proposed method reduces contact-force constraint violations relative to a standard CBF.
\section{Preliminaries and Problem Formulation}\label{sec:preliminaries}
In this section, we briefly review the implicit contact dynamics model and discrete-time control barrier functions used in the proposed safety filter.
\subsection{Contact Dynamics Model} 
Accurate modeling of contact forces is important when robots interact with environments. This paper adopts the complementarity-based contact dynamics model of \cite{howell2022dojo}, in which contact and friction are modeled using a Nonlinear Complementarity Problem (NCP).

Let \(\phi : \mathcal{Z} \to \mathbb{R}\) be the signed-distance function, where \(\mathcal{Z}\) denotes the configuration space. The normal contact force at time step $k$ is denoted by $\gamma_{k}$. In the discrete-time setting, a central-path parameter $\kappa > 0$ is introduced to relax the hard complementarity condition. The resulting unilateral contact conditions at configuration $z\in\mathcal{Z}$ are
\begin{subequations}\label{eq:penetration_constraints}
\begin{align}
    \phi(z) &\ge 0, \label{eq:relaxed_contact_gap}\\
    \gamma &\ge 0, \label{eq:relaxed_contact_force}\\
    \gamma \phi(z) &= \kappa. \label{eq:relaxed_contact_central_path}
\end{align}
\end{subequations}
These conditions enforce non-penetration and unilateral contact while replacing the hard complementarity relation ($\kappa=0$) with a smooth central-path condition ($\kappa > 0$). Fig.~2 illustrates the resulting effect on local contact sensitivity in a one-dimensional box contact example. In the nonsmooth model, the contact-force sensitivity with respect to the vertical input force changes abruptly near lift-off, whereas the smoothed model produces a differentiable transition whose sensitivity depends on $\kappa$. 

Frictional contact is modeled using the relative tangential velocity $v_{k+1}$ between the objects in contact and tangential contact impulse $\mathbf{\beta}_{k+1}$ at the contact point. In the discrete-time setting, Coulomb friction is derived from the maximum dissipation principle, which yields
\begin{equation}
\begin{aligned}
\min_{\mathbf{\beta}_{k+1}} \quad & v_{k+1}^\top \mathbf{\beta}_{k+1} \\
\text{s.t.} \quad & \|\mathbf{\beta}_{k+1}\|_2 \le \mu \gamma_{k+1},
\end{aligned}
\label{eq:friction_constraint}
\end{equation}
where $\mu$ is the coefficient of friction. Here, $v_{k+1}=v(z_k,z_{k+1})$ is determined by the current and next configurations and is not an independent variable.

Together with the unilateral contact condition \eqref{eq:penetration_constraints} and the friction-cone constraint \eqref{eq:friction_constraint}, this defines the contact update at time step $k$ as a nonlinear complementarity problem. 
Let
\begin{equation}
\theta_k := (z_{k-1}, z_k, u_k), \,\,\,
w_k := (z_k,z_{k+1}, \gamma_{k+1}, \mathbf{\beta}_{k+1}),\label{eqn:theta+w}
\end{equation}
where $\theta_k$ denotes the problem data at time step $k$, and $w_k$ denotes the corresponding solution.

The relaxed implicit contact dynamics can be written compactly as
\begin{equation}
r_{\kappa}(w_k;\theta_k)=0.
\label{eq:relaxed_ncp}
\end{equation}
Let $w_k^\star$ denote a solution to \eqref{eq:relaxed_ncp}. If $r$ is continuously differentiable and $\partial r/\partial w_k$ is nonsingular at $w_k^\star$, the implicit function theorem \cite{dini1907lezioni} yields a locally unique differentiable solution map $w_k^\star=\psi_\kappa(\theta_k)$, with 
\begin{equation}
\frac{\partial w_k^\star}{\partial \xi}
=
-
\left(
\frac{\partial r}{\partial w_k}
\right)^{-1}
\frac{\partial r}{\partial \xi},
\qquad
\xi \in \{\theta_k,\kappa\}. \label{eq:implicit_diff_eq}
\end{equation}
The contact force component of this sensitivity later provides the local input derivative needed to formulate QP-compatible CBF constraints. 
 \begin{figure}[t]
    \centering
    \includegraphics[width=0.85\linewidth]{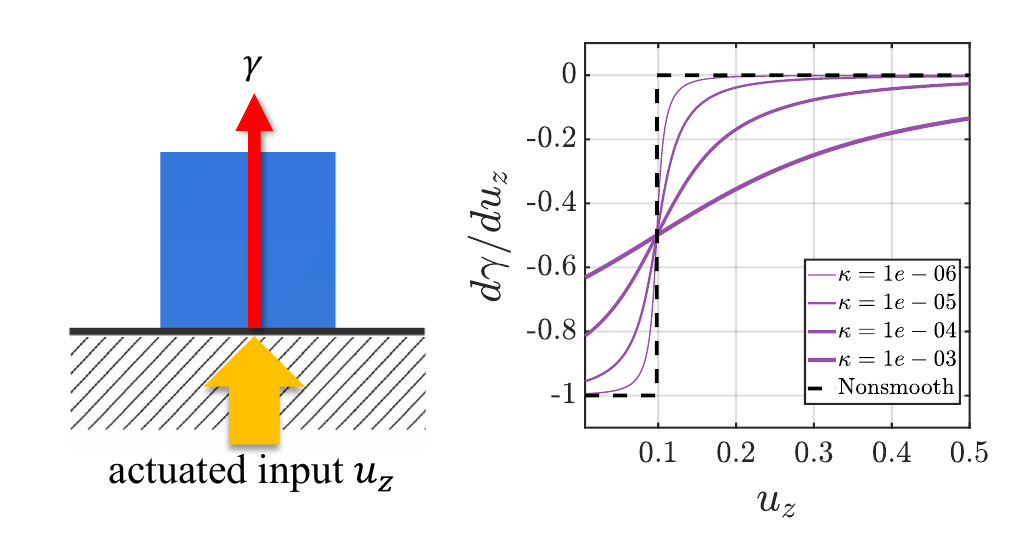}
    \caption{Gradients are computed for different values of the central-path parameter $\kappa$. The dynamics involve a box of mass 0.01 kg resting on a flat surface in the XZ plane. The applied force was gradually increased from 0~N to 0.5~N.}
    \label{fig:kappa_smoothing}
\end{figure}
 
\subsection{Control Barrier Functions}
Consider the following control system with explicit dynamics
\begin{equation}
x_{k+1} = f(x_k,u_k) \label{system}
\end{equation}
where $x_k \in \mathcal{X} \subset \mathbb{R}^n$ consists of $z_k$ and $v_k$, and \(f : \mathcal{X}\times \mathcal{U} \rightarrow \mathbb{R}^n\) is locally Lipschitz. The safe set is defined as the \(0\)-superlevel set of a continuously differentiable function \(h: \mathbb{R}^n \rightarrow \mathbb{R}\):
\begin{equation}
\mathcal{S} = \{ x \in \mathcal{X} \mid h(x) \ge 0\}.\label{eqn:safe_set_def}
\end{equation}
For system \eqref{system}, the continuously differentiable function \(h\) is a discrete-time control barrier function \cite{zeng2021safety} if \(\frac{\partial h}{\partial x} \neq 0\) for all \(x_k \in \partial \mathcal{S}\), and there exists an extended class $\mathcal{K}$ function $\alpha$ such that, for all $x_k$, there exists $u_k\in\mathcal{U}$ satisfying
\begin{equation}
h_{k+1} - h_k \ge -\alpha(h_k),
\label{discrete-CBF}
\end{equation}
where $h_k := h(x_k)$ and $h_{k+1} := h(x_{k+1})$. This implies that the system is safe with respect to $\mathcal{S}$ for any $x_0\in \mathcal{S}$. Following \cite{zeng2021safety}, we define $\alpha(h_k) := \alpha h_k$ where $\alpha$ is chosen as a constant such that $0<\alpha \leq 1.$ 
\subsection{Problem Formulation}
This paper considers contact-force safety for smoothed implicit contact dynamics. At each time step $k$, given the current state and a nominal input $u_k^{\mathrm{nom}}$, let $\gamma_{k+1}^{\mathrm{nom}} := \gamma_{k+1}(u_k^{\mathrm{nom}};\kappa)$ denote the corresponding next-step contact force. The goal is to compute a safety-filtered input $u_k \in \mathcal{U}$ that minimally deviates from $u_k^{\mathrm{nom}}$ while satisfying $\gamma_{k+1}(u_k;\kappa) \leq \gamma_{\max}$.

To support this objective, we analyze how the central-path parameter $\kappa$ affects the relaxed contact response and safety margin, and calibrate a robust margin over a task-specific operating set $\mathcal{D}$. As summarized in Fig.~\ref{fig:workflow}, a fixed rollout set $\mathcal{P}$ is generated offline from task-specific perturbations and used for $\kappa$ selection and margin calibration. The nominal signed-distance profile $\phi_k^{\mathrm{nom}}:=\phi(z_k^{\mathrm{nom}})$ illustrates the contact evolution represented by these rollouts. The selected parameters are fixed during online safety filtering, which is independent of how the nominal input is generated. We also consider a conditional extension to the unsmoothed dynamics given a valid smoothing-mismatch bound.

\begin{figure}[t]
    \centering
    \includegraphics[width=0.88\linewidth]{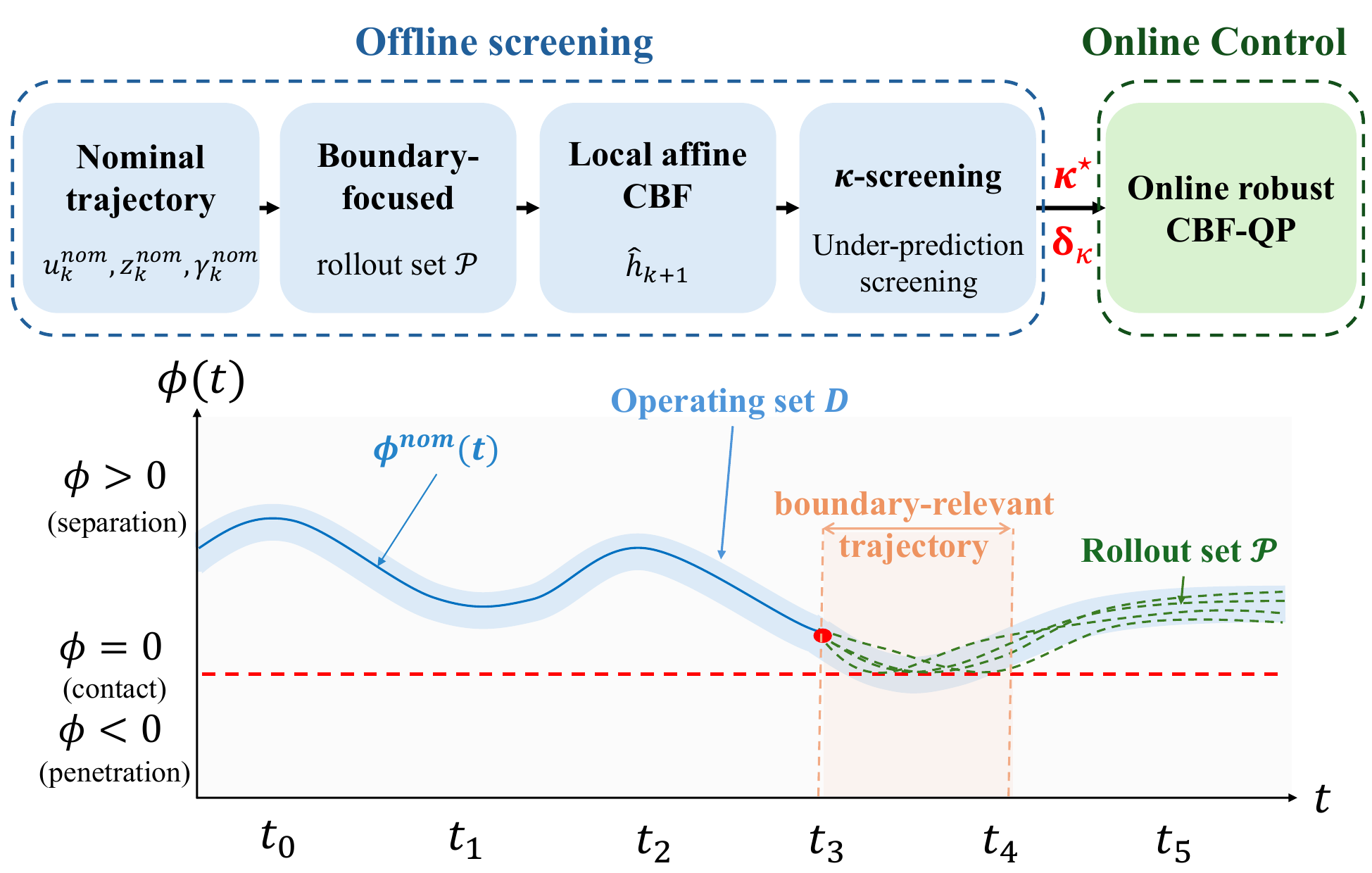}
    \caption{Offline construction of the boundary-focused rollout set $\mathcal{P}$ and online robust CBF-QP filtering.}
    \label{fig:workflow}
\end{figure}
\section{Safety-Critical Control with Implicit Contact Dynamics}\label{sec:safety}
CBF frameworks typically rely on explicit control-affine system dynamics, allowing safety constraints to be written as quadratic programs (QPs). In our setting, the contact force is defined implicitly through \eqref{eq:relaxed_ncp}, so the standard CBF-QP cannot be applied directly. We therefore use a local linear approximation of the solution to derive a QP-compatible constraint.

\subsection{Approximation of Implicit Contact Dynamics}
At this stage, two approaches are possible: 1) Combine the CBF-based safety filter directly with the implicit contact dynamics, which leads to a nonlinear optimization problem that preserves the implicit model structure; 2) Locally linearize the implicit dynamics around a nominal trajectory, which yields an affine approximation suitable for a standard CBF-QP. We adopt a local approximation of \eqref{eq:relaxed_ncp} as it is suited for real-time implementation. 

Let $\theta_k^{\mathrm{nom}} =(z_{k-1}^{\mathrm{nom}},z_k^{\mathrm{nom}},u_k^{\mathrm{nom}})$ denote the problem data at the nominal input and $w_k^{\mathrm{nom}}$ be the solution such that $r_\kappa(\theta_k^{\mathrm{nom}},w_k^{\mathrm{nom}})=0$.
Then, the solution $w_k$ satisfying $r_\kappa(w_k;\theta_k)=0$  can be locally approximated using a first-order Taylor expansion  based on the implicit sensitivity in \eqref{eq:implicit_diff_eq} as
\begin{align}
    \hat{w}_k \approx w_k^{\mathrm{nom}}+\frac{\partial w}{\partial \theta}(\theta_k-\theta_k^{\mathrm{nom}}),
    \label{eqn:taylor_full}
\end{align}
where the sensitivity is evaluated at $(w_k^{\mathrm{nom}},\theta_k^{\mathrm{nom}})$.

Using the differentiable solution map $w_k=\psi_{\kappa}(\theta_k)$, we define the next-step safety margin as
\begin{align}
    h_{k+1} := (h_w\circ\psi_\kappa)(\theta_k) = h_w(w_k),
    \label{eq:implicit_barrier}
\end{align}
where $h_w$ is a continuously differentiable function \(h_w: \mathcal{W} \rightarrow \mathbb{R}\). Finally, an affine approximation of $h_{k+1}$ in \eqref{discrete-CBF} is defined as
\begin{equation}
\hat{h}_{k+1}\approx h_w(w_k^{\mathrm{nom}})+\frac{\partial h_w}{\partial w}\frac{\partial w}{\partial \theta}(\theta_k-\theta_k^{\mathrm{nom}}).
\label{eqn:h approx}
\end{equation}
For safety-critical control, only the control input $u_k$ changes in the problem data $\theta_k$ (see \eqref{eqn:theta+w}); thus, \eqref{eqn:h approx} reduces to
\begin{align}
    \hat{h}_{k+1}\approx h_{k+1}^{\mathrm{nom}}+J_k(u_k-u_k^{\mathrm{nom}}),
    \label{eqn:taylor_w}
\end{align}
where $h_{k+1}^{\mathrm{nom}}:=h_w(w_k^{\mathrm{nom}})$ and 
$J_k:=\left.\frac{\partial h_w}{\partial w}\frac{\partial w}{\partial u_k} \right|_{(w_k^{\mathrm{nom}},\theta_k^{\mathrm{nom}})}.$

The safety-critical input is computed by solving the following CBF-QP:
\begin{equation}
\begin{aligned}
u_k^\star={}&\underset{u_k \in \mathcal{U}}{\arg\min}\; \frac{1}{2}\left\|u_k-u_k^{\mathrm{nom}}\right\|^2 \\
\text{s.t.}\quad &
h^{\mathrm{nom}}_{k+1}+J_k\left(u_k-u_k^{\mathrm{nom}}\right)-h_k \ge-\alpha h_k.
\end{aligned}
\label{eq:cbf_lin2}
\end{equation}
We now specialize the general construction to contact-force safety by choosing \(h_w(w):=\gamma_{\max}-\gamma\). With this choice, the current and nominal next-step safety margins are
\begin{equation}
    h_k:=\gamma_{\max}-\gamma_k,\quad
    h_{k+1}^{\mathrm{nom}}
    := h_w(w_k^{\mathrm{nom}})
    =\gamma_{\max}-\gamma_{k+1}^{\mathrm{nom}}.
\end{equation}

The local force-approximation error is accounted for in Sec.~\ref{sec:robust_cbf} through robust tightening.

\begin{figure}[t]
    \centering
    \includegraphics[width=0.88\linewidth]{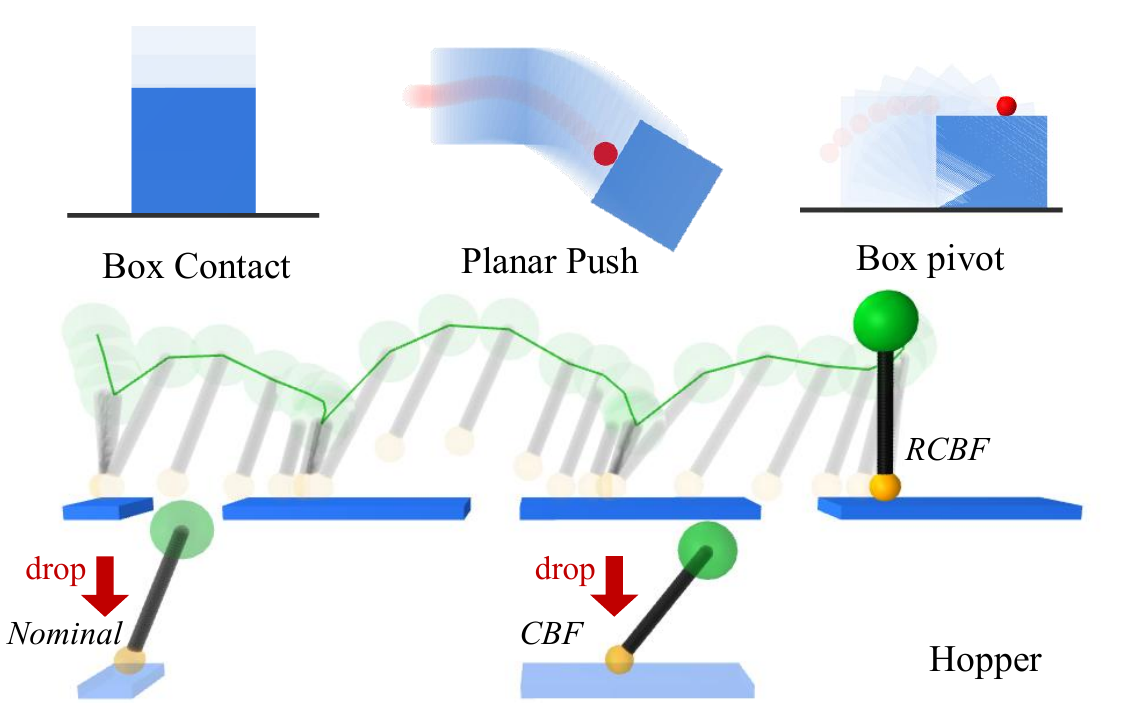}
    \caption{Contact-force safety examples under smoothed implicit contact dynamics. Red circles denote independently actuated pushers in frictional contact; the bottom row shows forward hopping over a possible terrain drop.}
    \label{fig:hero}
\end{figure}
\subsection{$\kappa$-Dependent Safety Margin Analysis}
The parameter $\kappa$ affects both the accuracy of the local contact-force approximation and the resulting contact response. Since the barrier function is defined in terms of the contact force, the safety margin also depends on $\kappa$ through $\gamma_{k+1}(u_k;\kappa)$. The corresponding safety-margin approximation error is
\begin{equation}
\Delta h_k := h_{k+1}-\hat h_{k+1} = -\epsilon_{\gamma,k}(u_k;\kappa).
\label{eq:approximation_error}
\end{equation}
We use the one-dimensional box-contact example in \figref{fig:hero} to illustrate how these two effects jointly influence safety. \figref{fig:lin_error} shows that the one-step approximation error decreases as $\kappa$ becomes smaller. However, as shown in Fig.~\ref{fig:kappa_cbf_test}, among the four tested values of $\kappa$ at $\gamma_{\max}=0.25\,\mathrm{N}$, only $\kappa = 10^{-4}$ remains within the force limit, whereas both smaller and larger values lead to force violations.

 \begin{figure}[t]
    \centering
    \includegraphics[width=0.8\linewidth]{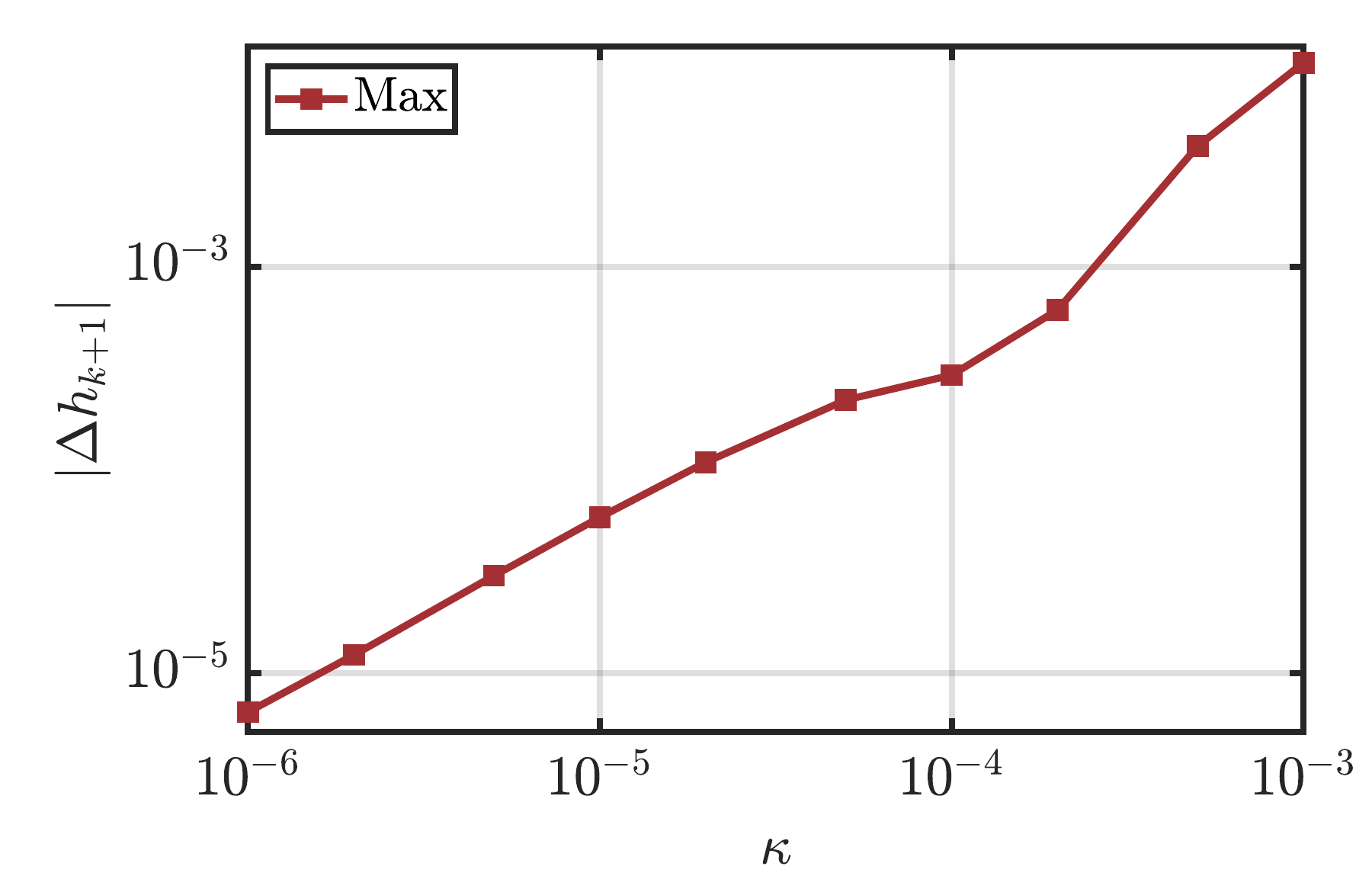}
    \caption{Maximum absolute safety-margin approximation error $|\Delta h_k|$ versus $\kappa$ for the 1D box-contact rollout, evaluated over steps with $\gamma_{k+1}>0$.}
    \label{fig:lin_error}
\end{figure}

The finer sweep in \figref{fig:kappa_sweep_test}(a) shows that the non-violation region appears over an interval of $\kappa$ near $10^{-4}$. This behavior can be understood from the interaction between the predicted safety margin and the approximation error. While the CBF-QP enforces the linearized constraint using $\hat{h}_{k+1}$, true safety depends on 
\begin{equation}
h_{k+1} = \hat{h}_{k+1} + \Delta h_k. \label{eq:safety_h}
\end{equation}
Consequently, a safety violation occurs whenever
\begin{equation}
\hat{h}_{k+1} + \Delta h_k < 0. 
\label{eq:safe_violation_condition}
\end{equation}
Although decreasing $\kappa$ reduces the approximation error, the empirical results show that safety does not necessarily improve monotonically as $\kappa$ decreases. To interpret this behavior, we examine the local
central-path relation along the $\kappa$-dependent closed-loop solution.
Consider a range of positive $\kappa$ values over which the predicted force constraint remains active. Assume that the resulting closed-loop quantities are differentiable with respect to $\kappa$. Using the signed-distance function in \eqref{eq:relaxed_contact_gap}, define the contact gap as
\begin{equation}
    \Phi_{k+1}(\kappa) := \phi\bigl(z_{k+1}^\star(\theta_k(\kappa),\kappa)\bigr),
\end{equation}
where $z_{k+1}^\star(\theta_k(\kappa),\kappa)$ is the next-step state component induced by the smoothed implicit contact dynamics. 
Suppose that the contact force and the closed-loop gap satisfy the local smoothed central path relation
\begin{align}
    \gamma_{k+1}(\kappa)>0, \Phi_{k+1}(\kappa)>0,\\
    \gamma_{k+1}(\kappa)\Phi_{k+1}(\kappa)=\kappa.
\end{align}
Define the true next-step safety margin as
\begin{equation}
    h_{k+1}(\kappa):=\gamma_{\max} - \gamma_{k+1}(\kappa),
\end{equation}
and define the logarithmic closed-loop gap sensitivity as
\begin{equation}
    A_\phi(\kappa)
    :=
    \frac{d\log \Phi_{k+1}(\kappa)}
    {d\log \kappa}
    =
    \frac{\kappa}{\Phi_{k+1}(\kappa)}
    \frac{d\Phi_{k+1}(\kappa)}{d\kappa}.
\end{equation}
Log-differentiating the central-path relation gives
\begin{equation}
    \frac{d h_{k+1}}{d\log \kappa}
    =
    \gamma_{k+1}(\kappa)
    \left(A_\phi(\kappa)-1\right),
\end{equation}

\begin{figure}[t]
    \centering
    \includegraphics[width=0.95\linewidth]{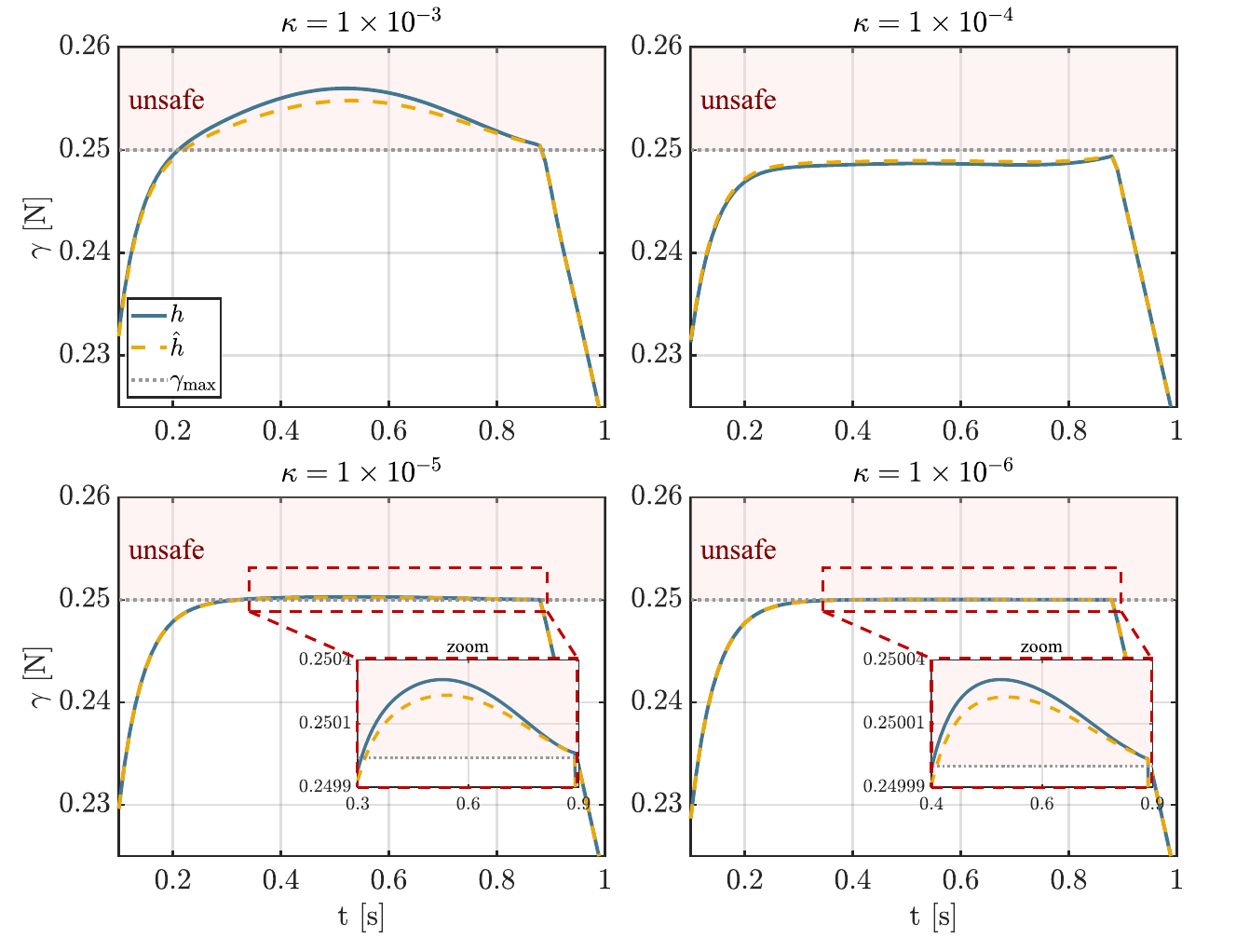}
    \caption{True and predicted contact forces near the safety boundary under the CBF-QP for different $\kappa$ values. The dotted gray line and shading indicate $\gamma_{\max}$ and the unsafe region $\gamma>\gamma_{\max}$, respectively.}
    \label{fig:kappa_cbf_test}
\end{figure}
Consequently, $A_\phi(\kappa)<1$ implies that decreasing $\kappa$ locally increases the true safety margin, whereas $A_\phi(\kappa)>1$ implies that decreasing $\kappa$ locally decreases the true safety margin. A first-order stationary point can occur when $A_\phi(\kappa)=1$.

This local identity shows that reducing $\kappa$ does not necessarily increase the true safety margin. A smaller $\kappa$ would improve the margin only if the closed-loop gap remained fixed. In closed loop, however, this gap also changes with $\kappa$ through the state, state increment, safety-filtered input, and smoothed contact update. This effect is captured by $A_\phi(\kappa)$. Therefore, the margin sensitivity has no fixed sign and depends on the sign of $A_\phi(\kappa)-1$.

The results in \figref{fig:kappa_sweep_test} are consistent with this interpretation. Since $h_{k+1}=\hat{h}_{k+1}+\Delta h_k$, true safety requires the predicted margin to compensate for any negative approximation error. 
As $\kappa$ decreases, $\Delta h$ approaches zero from below in (b), while the predicted margin can still be small near the safety boundary in (c). Thus, smaller $\kappa$ is not necessarily safer; for this system
and trajectory, an intermediate $\kappa$ provides the best observed balance between approximation accuracy and margin preservation.
\subsection{Boundary-Focused Screening of $\kappa$}\label{sec:kappa_screening}
The previous section shows that the true safety margin can vary non-monotonically with $\kappa$. Thus, $\kappa$ should not be regarded merely as a numerical smoothing parameter, since it also influences the relaxed contact response and the resulting closed-loop safety behavior. This motivates a boundary-focused compatibility screening procedure for $\kappa$, summarized in Algorithm~\ref{alg:kappa_screening}. The screening criterion compares the predicted margin with the observed one-step under-prediction near the safety boundary.
\begin{figure*}[t]
    \centering
    \includegraphics[width=0.95\linewidth]{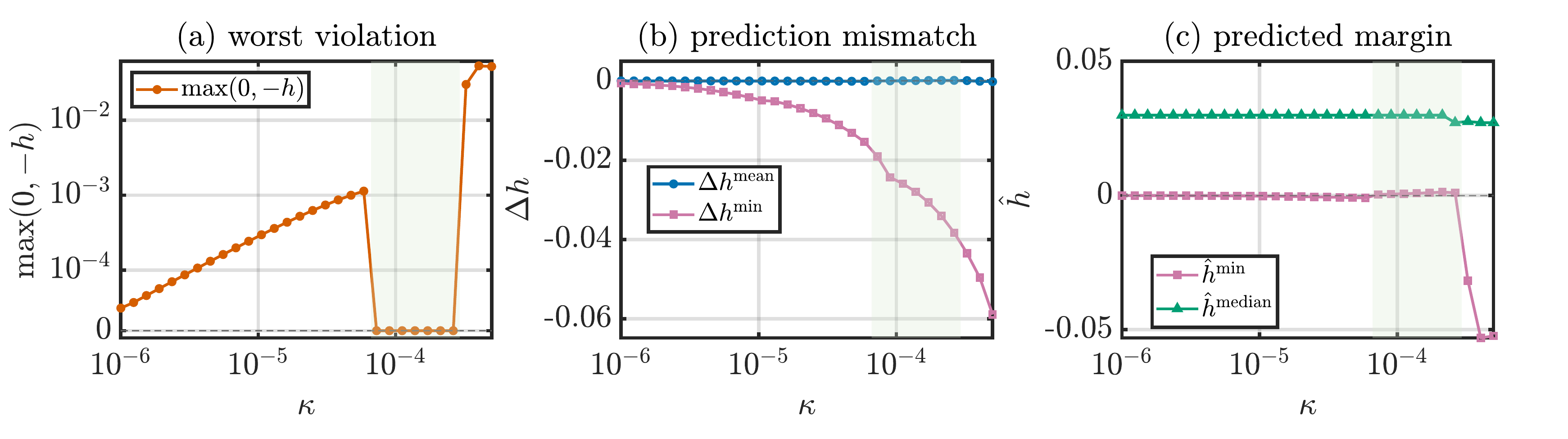}
    \caption{Safety-related metrics across the central-path parameter $\kappa$. (a) Worst-case violation magnitude, $\max(0,-h)$. (b) Approximation error, $\Delta h := h-\hat{h}$, showing its mean and minimum values. (c) Predicted safety margin with minimum and median values. The shaded region indicates the non-violation interval of $\kappa$ observed in simulation.}
    \vspace{-0.25cm}
    \label{fig:kappa_sweep_test}
\end{figure*}
\begin{algorithm}[h]
\caption{Boundary-Focused Screening of $\kappa$}
\label{alg:kappa_screening}
\begin{algorithmic}[1]
\Require Candidates $\mathcal K$, rollouts $\mathcal P$, $H,p,\beta$
\Ensure $\kappa^\star$
\For{$\kappa\in\mathcal K$}
    \State Run CBF rollouts on $\mathcal P$ and collect
    $\hat h_{k+1}, h_{k+1}$
    \State $\Delta h_k \gets h_{k+1}-\hat h_{k+1}$
    \State $\rho_\kappa \gets Q_{1-\beta}((-\Delta h_k)^+)$
    \State $S_p(\kappa) \gets Q_p(\hat h_{k+1})-\rho_\kappa$
    \State Compute $r_{\rm fail}(\kappa)$
\EndFor
\State $\mathcal K_{\rm comp}\gets
\{\kappa\in\mathcal K:r_{\rm fail}(\kappa)=0,\ S_p(\kappa)\ge 0\}$
\If{$\mathcal K_{\rm comp}\neq\emptyset$}
    \State $\mathcal K_{\rm score}\gets
    \arg\max_{\kappa\in\mathcal K_{\rm comp}} S_p(\kappa)$
    \State $\kappa^\star \gets \min \mathcal K_{\rm score}$
\Else
    \State $\mathcal K_{\rm fail}\gets
    \arg\min_{\kappa\in\mathcal K} r_{\rm fail}(\kappa)$
    \State $\mathcal K_{\rm score}\gets
    \arg\max_{\kappa\in\mathcal K_{\rm fail}} S_p(\kappa)$
    \State $\kappa^\star \gets \min \mathcal K_{\rm score}$
\EndIf
\State \Return $\kappa^\star$
\end{algorithmic}
\end{algorithm}
As illustrated in Fig.~\ref{fig:workflow}, the fixed rollout set $\mathcal{P}$ is constructed by perturbing task-specific states and inputs within the operating set $\mathcal{D}$ and simulating the resulting closed-loop trajectories over a short horizon $H$. For each candidate $\kappa$, the same scenarios are evaluated using the standard CBF-QP to collect the predicted margin $\hat{h}_{k+1}$ and true margin $h_{k+1}$ over all valid steps. Accordingly, the screening and margin calibration are local to the portion of $\mathcal{D}$ sampled by $\mathcal{P}$. Since safety depends on $h_{k+1}=\hat h_{k+1}+\Delta h_k$, negative values of $\Delta h_k$ indicate that the predicted margin is optimistic. We therefore define
\begin{align}
\rho_\kappa
&=
Q_{1-\beta}\!\left((-\Delta h_k(\kappa))^+\right), \\
S_p(\kappa)
&=
Q_p\!\left(\hat h_{k+1}(\kappa)\right)-\rho_\kappa,
\end{align}
where $Q_p(\cdot)$ denotes the empirical $p$-quantile of the collected rollout samples, and $(a)^+ := \max(a,0)$ denotes the positive-part operator. Here, $\rho_\kappa$ is the selected upper-tail under-prediction level, and $S_p(\kappa)$, referred to as the compatibility score, measures whether the lower-tail predicted margin is large enough to compensate for the observed one-step under-prediction. 

The parameters $p$ and $\beta$ determine the lower-tail margin quantile and the upper-tail under-prediction quantile, respectively. Here, $r_{\mathrm{fail}}(\kappa)$ denotes the fraction of rollouts with an implicit-solver failure, independently of $S_p(\kappa)$.
A candidate is called compatible if it has no rollout failures and its compatibility score is nonnegative. The compatible set is
\begin{equation}
\mathcal K_{\rm comp} = \left\{ \kappa\in\mathcal K: r_{\rm fail}(\kappa)=0,\;
S_p(\kappa)\ge 0 \right\}.
\end{equation}
\subsection{Robust Barrier Tightening Under Approximation Error}\label{sec:robust_cbf}

\begin{figure*}[h]
    \centering
     \includegraphics[width=0.95\linewidth]{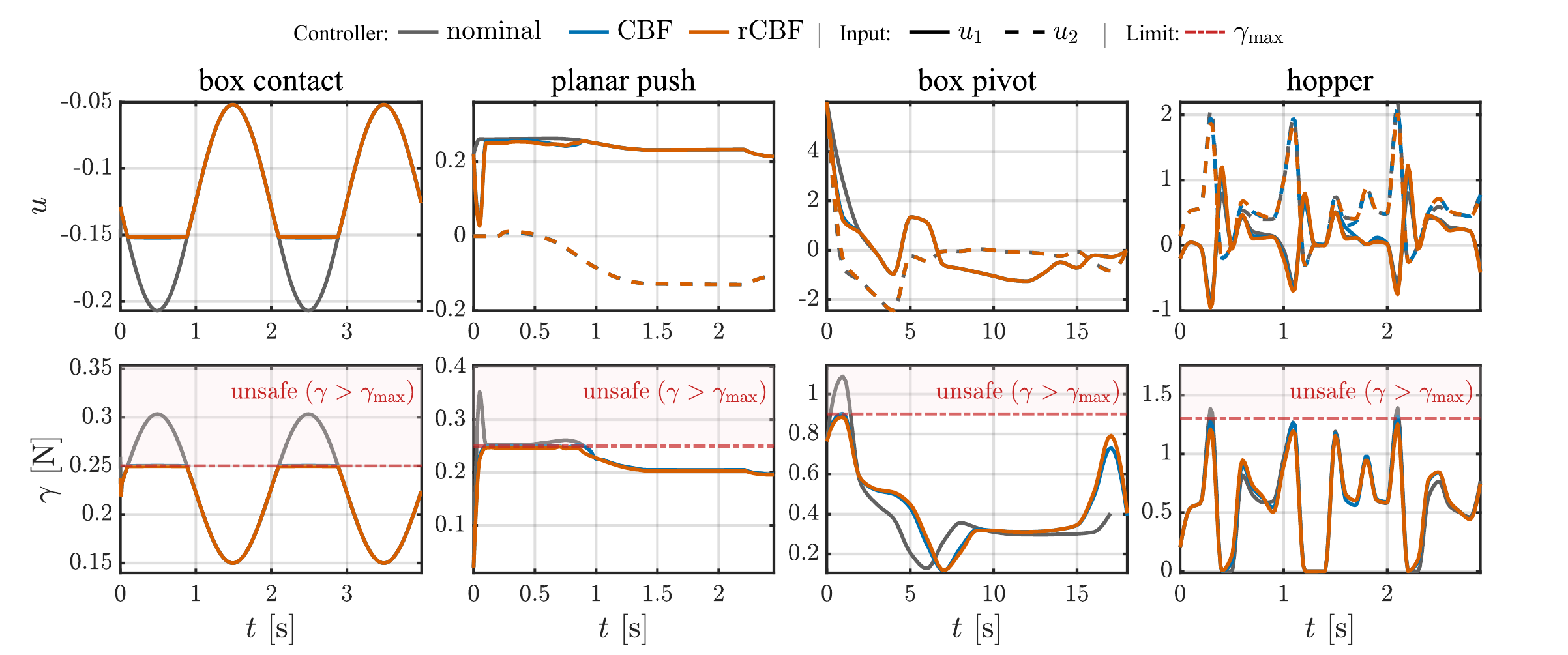}
    \caption{Control inputs $u$ (top) and safety metric $\gamma$ (bottom) for four systems. Gray, blue, and orange denote nominal, CBF, and rCBF, respectively; solid and dashed curves in the top row denote $u_1$ and $u_2$ (box contact uses scalar $u$). Red dash-dotted lines and shading indicate $\gamma_{\max}$ and unsafe regions ($\gamma > \gamma_{\max}$). Overlapping curves represent nearly identical commands. For box pivoting, the plotted contact force is the normal force at the active pivot contact.}
    \vspace{-0.25cm}
    \label{fig:simulations_cbf}
\end{figure*}

\begin{figure}[t]
    \centering
    \includegraphics[width=1.0\linewidth]{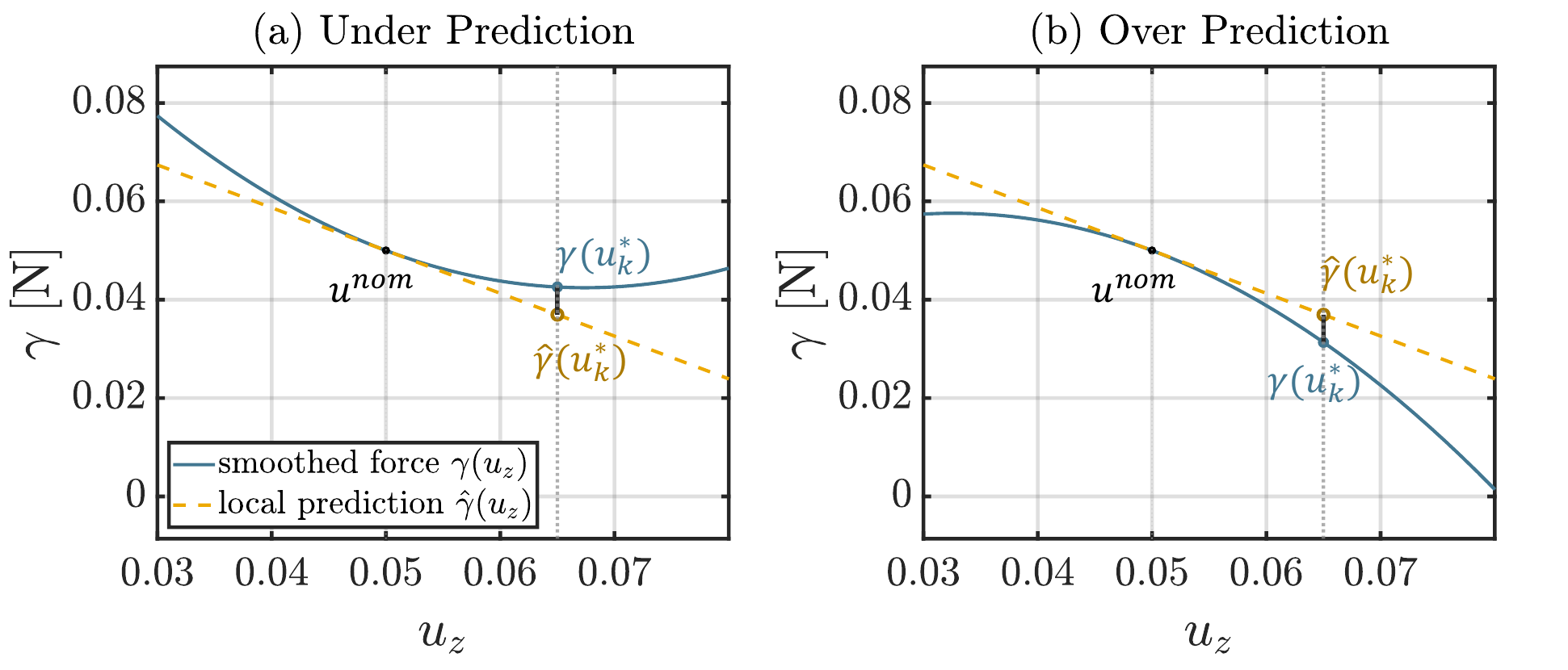}
    \caption{One-step force prediction error, where $\hat{\gamma}:=\gamma_{\max}-\hat{h}_{k+1}$ denotes the local force prediction around $u^{\mathrm{nom}}$. The prediction can under- or over-estimate the true force at \(u_k^\star\). Under-prediction is safety-critical because it may make an unsafe input appear feasible.} \label{fig:under_over}
\end{figure}

The screening procedure identifies a $\kappa$ whose predicted margin is compatible with the observed one-step under-prediction. However, it does not remove the residual approximation error introduced by the local Taylor approximation of the implicit contact force. Consequently, satisfying the nominal discrete-time CBF constraint based on the predicted safety margin does not guarantee that the true safety margin is nonnegative.

As illustrated in \figref{fig:under_over}, under-prediction can make an unsafe input appear feasible. Let $\delta_{\mathrm{lin}}\geq0$ denote the approximation-error tightening. We require
\begin{equation}
-\delta_{\mathrm{lin}} \le \Delta h_k
\label{eq:delta_h_bound}
\end{equation}
over an operating region near the safety boundary.

\begin{proposition}[Tightened CBF Constraint]
Suppose that the approximation error satisfies \eqref{eq:delta_h_bound}
for some $\delta_{\mathrm{lin}}\ge 0$. If the control input satisfies the
tightened predicted CBF condition
\begin{equation}
\hat{h}_{k+1}(u_k;\kappa)
\ge
(1-\alpha)h_k+\delta_{\mathrm{lin}},
\label{eq:robust_cbf_constraint_2}
\end{equation}
then the true next-step safety margin satisfies
\begin{equation}
h_{k+1} \ge (1-\alpha)h_k.
\end{equation}
In particular, if $h_k\ge0$ and $\alpha\in(0,1]$, then
$h_{k+1}\ge0$.
\end{proposition}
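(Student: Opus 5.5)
The argument is a direct chaining of the additive error decomposition \eqref{eq:safety_h} with the assumed lower bound \eqref{eq:delta_h_bound}. We evaluate everything at the applied input $u_k$ and the fixed $\kappa$. In that setting the true next-step margin decomposes exactly as $h_{k+1}=\hat h_{k+1}(u_k;\kappa)+\Delta h_k$, which is the definition \eqref{eq:approximation_error}. No property of the implicit solution map $\psi_\kappa$ is needed beyond the existence of $h_{k+1}$ and $\hat h_{k+1}$ at $u_k$.

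The first step is to insert the tightened condition \eqref{eq:robust_cbf_constraint_2} for the predicted term and the bound $\Delta h_k\ge-\delta_{\mathrm{lin}}$ for the error term. This gives
\begin{equation*}
h_{k+1}=\hat h_{k+1}+\Delta h_k\ge (1-\alpha)h_k+\delta_{\mathrm{lin}}-\delta_{\mathrm{lin}}=(1-\alpha)h_k .
\end{equation*}
Rearranged, this is the discrete-time CBF inequality \eqref{discrete-CBF} with $\alpha(h_k)=\alpha h_k$, now holding for the true margin rather than the predicted one.

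The second step covers the special case. Since $\alpha\in(0,1]$, the factor $1-\alpha$ lies in $[0,1)$. Hence $h_k\ge0$ gives $(1-\alpha)h_k\ge0$, and therefore $h_{k+1}\ge0$. Applying this one step at a time along the closed loop yields forward invariance of $\mathcal S$, in the sense of \cite{zeng2021safety}.

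The only substantive point is not algebraic. It is making sure \eqref{eq:delta_h_bound} is actually in force at the pair $(x_k,u_k)$ where the filter is applied. The bound is assumed only over an operating region near the safety boundary. I would therefore state explicitly that the conclusion is local: it holds whenever $(x_k,u_k)$ lies in the region where $\delta_{\mathrm{lin}}$ is valid, for example as calibrated on $\mathcal P\subset\mathcal D$. The algebra itself goes through without obstruction.
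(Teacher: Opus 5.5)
Your proof is correct and follows the paper's own argument: you substitute the tightened condition and the bound $\Delta h_k\ge-\delta_{\mathrm{lin}}$ into the decomposition $h_{k+1}=\hat h_{k+1}+\Delta h_k$, then use $(1-\alpha)h_k\ge0$ when $h_k\ge0$ and $\alpha\in(0,1]$. Your forward-invariance remark goes beyond the proposition, since it would also need the tightened QP to stay feasible and the error bound to hold at every step, but this does not affect the one-step claim.
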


\begin{proof}
Using \eqref{eq:delta_h_bound}, \eqref{eq:robust_cbf_constraint_2}, and \eqref{eq:safety_h}, we obtain
\begin{equation}
h_{k+1} = \hat{h}_{k+1} + \Delta h_k
\ge
(1-\alpha)h_k+\delta_{\mathrm{lin}}+\Delta h_k
\ge
(1-\alpha)h_k.
\label{eq:true_barrier_guarantee}
\end{equation}
If $h_k\ge0$ and $\alpha\in(0,1]$, then
$(1-\alpha)h_k\ge0$, and hence $h_{k+1}\ge0$.
\end{proof}

Hence, when the approximation error bound in \eqref{eq:delta_h_bound} holds, the tightened constraint guarantees that the true safety margin
of the smoothed implicit model satisfies the nominal one-step barrier condition.

In practice, an exact closed-form bound for $\delta_{\mathrm{lin}}$ is generally unavailable. We therefore choose $\delta_{\mathrm{lin}}$ conservatively from boundary-focused offline rollouts using statistics of the observed under-prediction $(-\Delta h_k)^+$, such as an upper quantile or the maximum. 

As illustrated in \figref{fig:hero_figure}, the same tightening idea can be conditionally extended to the unsmoothed model when the smoothing mismatch is also bounded. Let $h_{k+1}^0$ denote the safety margin of the unsmoothed model, corresponding to $\kappa=0$. Then
\begin{equation}
    h_{k+1}^0= \hat{h}_{k+1}+\underbrace{(h_{k+1}-\hat{h}_{k+1})}_{\text{linearized error}}+\underbrace{(h_{k+1}^0-h_{k+1})}_{\text{smoothed mismatch}}.
\end{equation}
Using the same approximation-error margin $\delta_{\mathrm{lin}}$ and an additional smoothing-mismatch bound $\delta_{\mathrm{smooth}}$, if
\begin{align}
    -\delta_{\mathrm{lin}} &\le h_{k+1}-\hat{h}_{k+1},\\
    -\delta_{\mathrm{smooth}} &\le h_{k+1}^0-h_{k+1},
\end{align}
then enforcing
\begin{equation}
    \hat{h}_{k+1} \ge (1-\alpha)h_k^0 + \delta_{\mathrm{lin}} + \delta_{\mathrm{smooth}}
\end{equation}
implies
\begin{equation}
    h_{k+1}^0 \ge (1-\alpha)h_k^0.
    \label{eq:unsmoothed_contact}
\end{equation}
Accordingly, the QP directly constrains the local affine prediction $\hat{h}_{k+1}$. A valid linearized error bound transfers this condition to the smoothed implicit model margin $h_{k+1}$, while an additional smoothing mismatch bound conditionally extends it to the original unsmoothed model margin $h_{k+1}^0$.

The smoothing margin $\delta_{\mathrm{smooth}}$ can be interpreted through the impulse-momentum relation. Recall that $\gamma_{k+1}$ is the average contact force within time step $k$ of duration $T$. Due to the conservation of momentum, the difference between the unsmoothed force $\gamma^0_{k+1}$ and the smoothed force $\gamma^\kappa_{k+1}$ can be bounded as
\begin{equation}
    \gamma^0_{k+1}-\gamma^\kappa_{k+1}
    \le
    \frac{m_{\mathrm{eff}}}{T}
     \left|v^0_{n,k+1}-v^\kappa_{n,k+1}\right|,
     \label{eq:impulse_mismatch}
\end{equation}
where $m_{\mathrm{eff}}$ is the effective mass along the contact normal, and $v^0_{n,k+1}$ and $v^\kappa_{n,k+1}$ denote the post-contact normal velocities under the unsmoothed and smoothed contact dynamics, respectively. Assume that, over the considered state--input region $\mathcal{D}$, $m_{\mathrm{eff}}\leq\bar m_{\mathrm{eff}}$ and that the post-contact normal velocities of both models remain within a prescribed admissible range:
\begin{equation}
    v^0_{n,k+1},\,v^\kappa_{n,k+1}
    \in [\underline v_n,\overline v_n].
\end{equation}
Then,
\begin{equation}
    |v^0_{n,k+1}-v^\kappa_{n,k+1}|
    \leq \overline v_n-\underline v_n,
\end{equation}
and hence
\begin{equation}
    \gamma^0_{k+1}-\gamma^\kappa_{k+1}
    \leq
    \frac{\bar m_{\mathrm{eff}}}{T}
    (\overline v_n-\underline v_n)
    =:\delta_{\mathrm{smooth}}.
\end{equation}
Thus, the result in \eqref{eq:unsmoothed_contact} holds conditionally while these bounds remain valid over $\mathcal{D}$. In practice, $\bar m_{\mathrm{eff}}$ can be computed from the inertial model over $\mathcal{D}$, while $[\underline v_n,\overline v_n]$ can be selected conservatively from known state and input limits or offline rollouts; this calibration is not performed in the present study.
\section{Experiments and Results}\label{sec:Experiments}
We evaluate the proposed safety filter on the four contact-rich systems shown in \figref{fig:hero}: a 1D box contact, a planar push, a box pivot, and a hopper. The box-contact input is the vertical force $u_z$; the planar-push and box-pivot inputs are $u=[u_{p,x},u_{p,y}]$ and $u=[u_{p,x},u_{p,z}]$, respectively; and the hopper input is $u=[M,\dot r]$, where $M$ is the body-pitch torque and $\dot r$ is the leg-length rate command. Task-specific $\kappa$ screening follows Sec.~\ref{sec:kappa_screening}. Reference trajectories are generated via trajectory optimization following~\cite{howell2022trajectory} and tracked by a nominal PD controller. The local approximation and fixed robust margin are intended for the calibrated operating region; deviations beyond this region may require online replanning or additional safeguards.

 \begin{figure}[t]
    \centering
     \includegraphics[width=0.9\linewidth]{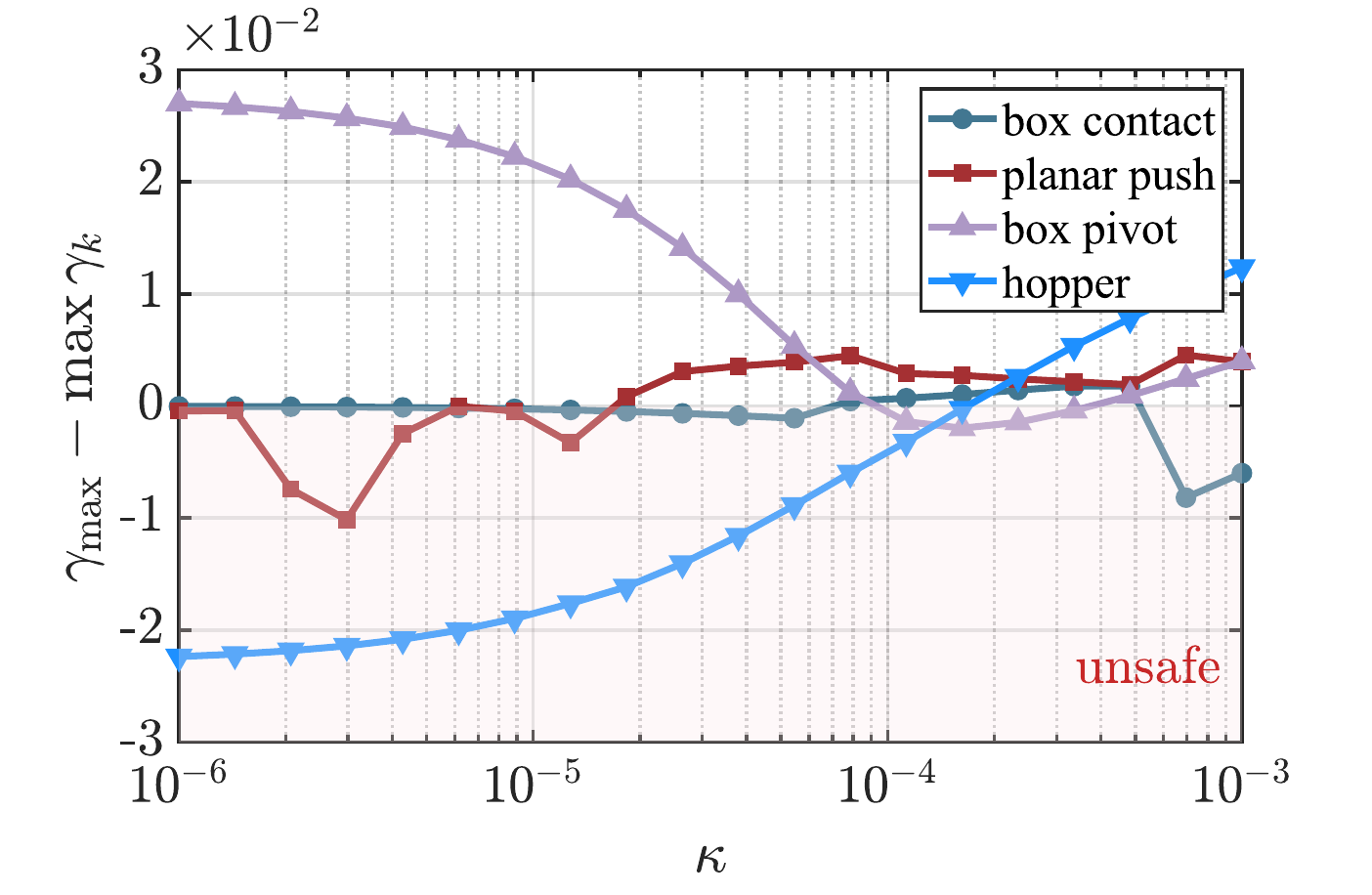}
     \caption{\small{$\kappa$-sweep results across four systems, showing the safety margin with unsafe regions shaded.}}
    \label{fig:kappa_sweep}
\end{figure} 

\subsection{$\kappa$-Dependent Safety Behavior}\label{sec:kappa_dependent}
Each system was evaluated at 20 logarithmically spaced $\kappa$ values in $[10^{-6},10^{-3}]$, with $\alpha=0.95$ fixed across systems, and checked for force-limit violations.
\figref{fig:kappa_sweep} shows that the relationship between $\kappa$ and safety margin varies substantially across systems and is often non-monotonic. In the box contact system, an intermediate band of $\kappa$ values leads to a safety violation, indicating that reducing $\kappa$ does not necessarily improve safety. The box pivot results show a similar pattern, with safe behavior at both smaller and larger $\kappa$ values but a safety violation near $\kappa \approx 10^{-4}$. The planar push also exhibits a non-monotonic trend, with substantial variation in safety margin across the tested range. In contrast, the hopper system shows a monotonic increase in safety margin as $\kappa$ increases. These results suggest that non-violating $\kappa$ values often form a range rather than a single optimal point, making $\kappa$ screening nontrivial in practice.

\subsection{Necessity of Robust CBF under Approximation Error}
We compare standard CBF and robust CBF control on the four systems using $\kappa$ from Algorithm~\ref{alg:kappa_screening} and $\alpha=0.95$. The selected $\kappa^\star$ and $\delta_{\mathrm{lin}}$ are fixed
before evaluation and are not tuned on the reported rollouts. 
In Table~I, the $(\kappa,\delta_{\mathrm{lin}})$ pairs are $(5.0\!\times\!10^{-5},5.0\!\times\!10^{-4})$, $(1.0\!\times\!10^{-4},2.4\!\times\!10^{-5})$, $(5.0\!\times\!10^{-4},1.9\!\times\!10^{-2})$, and $(3.0\!\times\!10^{-4},9.4\!\times\!10^{-2})$. Screening uses 40, 12, 8, and 40 rollout evaluations per candidate $\kappa$ over $H=120$, 12, 10, and 24 steps, respectively, with $p=\beta=0.1$.
These settings are specific to the reference trajectories and calibrated operating region.
The experiments instantiate only $\delta_{\mathrm{lin}}$ and concern the smoothed implicit dynamics; the conditional extension to unsmoothed dynamics requiring $\delta_{\mathrm{smooth}}$ is not evaluated.

\figref{fig:simulations_cbf} shows that the standard CBF reduces contact-force violations but may leave small residual violations due to approximation error even when the linearized CBF-QP constraint is satisfied. Table~\ref{tab:robust_quantitative} summarizes the quantitative comparison across all systems in terms of the violation rate (\textit{Viol.}), the maximum force overshoot above the limit (\textit{Max over.}), the peak contact force (\textit{Peak.}), and task success (\textit{Succ.}) for each controller. In these deterministic rollouts, the proposed robust CBF eliminates the observed contact force violations across all tested systems. It also improves task feasibility in the box-pivot and hopper examples, where the standard CBF fails to complete the task.

\begin{table}[t]
\centering
\caption{\small{Quantitative robust CBF comparison across four systems. Metrics are computed from deterministic rollouts for each controller. The force limits are $\gamma_{\max}=0.25$ (box contact), $0.245$ (planar push), $0.9$ (box pivot), and $1.3$ (hopper). Succ.: completion of the intended tip-over motion for box pivoting or forward traversal without falling for the hopper; N/A: no binary completion criterion is defined for the box-contact and planar-push examples.}}
\label{tab:robust_quantitative}
\footnotesize
\begin{tabular}{llcccc}
\toprule
System & Controller & Viol. & Max over. & Peak. & Succ. \\
\midrule
\multirow{3}{*}{Box contact} & Nominal & 0.398 & 0.0533 & 0.3033 & N/A \\
 & CBF & 0.345 & 0.0002 & 0.2502 & N/A \\
 & rCBF & 0.000 & 0.0000 & 0.2497 & N/A \\
\midrule
\multirow{3}{*}{Planar push} & Nominal & 0.340 & 0.0799 & 0.3249 & N/A \\
 & CBF & 0.260 & 0.0000 & 0.2450 & N/A \\
 & rCBF & 0.000 & 0.0000 & 0.2450 & N/A \\
\midrule
\multirow{3}{*}{Box pivot} & Nominal & 0.056 & 0.1890 & 1.0890 & Y \\
 & CBF & 0.053 & 0.0026 & 0.9026 & N \\
 & rCBF & 0.000 & 0.0000 & 0.8851 & Y \\
\midrule
\multirow{3}{*}{Hopper} & Nominal & 0.067 & 0.1017 & 1.4017 & N \\
 & CBF & 0.033 & 0.0117 & 1.3117 & N \\
 & rCBF & 0.000 & 0.0000 & 1.2225 & Y \\
\midrule
\bottomrule
\end{tabular}
\end{table}

\begin{table}[t]
\centering
\caption{\small{Sensitivity to fixed robust barrier tightening $\delta_{\mathrm{lin}}$ on the hopper and planar push systems. The columns $\rho_{85}$ and $\rho_{90}$ denote the 85th and 90th percentiles of the one-step under-prediction samples $(-\Delta h_k)^+$ used to choose $\delta_{\mathrm{lin}}$, and $\max$ denotes the maximum observed under-prediction.}}
\label{tab:delta_sensitivity_combined}
\footnotesize
\begin{tabular}{llcccc}
\toprule
System & Metric & $0$ & $\rho_{85}$ & $\rho_{90}$ & $\max$ \\
\midrule
\multirow{3}{*}{Hopper}
& Peak  & 1.3117 & 1.2713 & 1.2560 & 1.2186 \\
& Viol. & 0.033  & 0.000  & 0.000  & 0.000 \\
& Dev.  & 0.0688 & 0.0827 & 0.0866 & 0.1015 \\
\midrule
\multirow{3}{*}{Planar push}
& Peak  & 0.2450 & 0.2450 & 0.2450 & 0.2450 \\
& Viol. & 0.260  & 0.020  & 0.000  & 0.000 \\
& Dev.  & 0.0055 & 0.0055 & 0.0055 & 0.0055 \\
\midrule
\bottomrule
\end{tabular}
\end{table}

Table~\ref{tab:delta_sensitivity_combined} shows the trade-off induced by the fixed robust margin $\delta_{\mathrm{lin}}$ on the hopper and planar push systems. Here, \textit{Dev.} denotes the mean control deviation from the nominal input over the rollout, measured as $\|u-u^{\mathrm{nom}}\|$. In both systems, increasing $\delta_{\mathrm{lin}}$ improves safety but makes the control action more conservative. The required margin is system-dependent. On the hopper, $\rho_{85}$ is sufficient to eliminate force violations, whereas on the planar push, residual violations remain at $\rho_{85}$ and the tested rollout first becomes violation-free at $\rho_{90}$.




\section{Conclusion}\label{sec:discussion}
In this work, we presented a safety-critical control method for smoothed implicit contact dynamics by locally approximating the implicit contact force with a first-order Taylor expansion. Our analysis revealed that $\kappa$, while improving numerical smoothness, has a nontrivial effect on safety. Although smaller $\kappa$ reduces one-step approximation error, the resulting closed-loop safety behavior is non-monotonic because $\kappa$ changes both the relaxed contact response and the safety-filtered closed-loop trajectory.

These observations motivate two practical components: boundary-focused $\kappa$-screening to identify task-compatible values within the considered operating region, and a robust CBF with a tightening margin to reduce approximation-induced violations.  For hardware deployment, the rollout set $\mathcal{P}$
may be generated in an identified simulator using conservative
state, input, and parameter ranges, without requiring near-limit
hardware rollouts. The selected $\kappa$ and rollout-based
margin should therefore be interpreted as applicable only within
the calibrated operating region. Residual sim-to-real mismatch
and operation outside this region require additional validation,
recalibration, or safeguards. These results show that contact smoothing should be treated not only as a numerical device but also as a safety-relevant design parameter in contact-rich control. We expect this perspective to support the more reliable use of implicit contact dynamics in safety-critical robotic applications.

\section*{Acknowledgement}
This research was supported by Pioneer Center for Accelerating P2X Materials Discovery (CAPeX), DNRF grant number P3 and Fabrikant Vilhelm Pedersen og Hustrus Legat.


\bibliographystyle{IEEEtran}
\bibliography{bibliography}
\end{document}